\def\eqref#1{equation~\ref{#1}}
\def\1{\bm{1}}
\def\vv{{\bm{v}}}
\DeclareMathAlphabet{\mathsfit}{\encodingdefault}{\sfdefault}{m}{sl}
\SetMathAlphabet{\mathsfit}{bold}{\encodingdefault}{\sfdefault}{bx}{n}
\newcommand{\E}{\mathbb{E}}
\newcommand{\R}{\mathbb{R}}
\DeclareSIUnit\debye{D}  %
\DeclareSIUnit\cal{cal}  
\newcommand{\pcq}{PCQM4Mv2}
\newcommand{\inputmolset}{S}
\newcommand{\nummol}{{\left| S \right|}}
\newcommand{\pos}{\mathbf{p}}
\newcommand{\vertex}{\mathbf{v}}
\newcommand{\edge}{\mathbf{e}}
\newcommand{\vertexset}{V}
\newcommand{\edgeset}{E}
\newcommand{\graph}{G}
\newcommand{\gnn}{\textsc{gnn}}
\newcommand{\connnectivityradius}{{R_\text{cut}}}
\newcommand{\truelabel}{y}
\newcommand{\noisyinputmolset}{\tilde{S}}
\newcommand{\noisypos}{\tilde{\mathbf{p}}}
\newcommand{\noise}{\bm{\epsilon}}
\newcommand{\poscoeffsq}{{\sigma^2}}
\newcommand{\poscoeff}{{\sigma}}
\newcommand{\nummplayers}{L}
\newcommand{\Dus}{\mathcal{D}_\text{structures}} %
\newcommand{\ussize}{n} %
\newcommand{\norm}[1]{\left\lVert#1\right\rVert}
\newcommand{\x}{\mathbf{x}}
\newcommand{\xtilde}{\tilde{\mathbf{x}}}
\newcommand{\pphysics}{p_\text{physical}}
\newcommand{\encoder}{\textsc{Encoder}}
\newcommand{\decoder}{\textsc{Decoder}}
\newcommand{\processor}{\textsc{Processor}}
\newcommand{\tmdn}{TorchMD-NET}
\newcommand{\defeq}{\vcentcolon=}
\newcommand{\z}{\mathbf{z}}
\newcommand{\ztilde}{\tilde{\mathbf{z}}}
\newcommand{\trans}{\mathbf{t}}
\newcommand*\diff{\mathop{}\!\mathrm{d}}
\newcommand{\poscite}[1]{\citeauthor{#1}'s [\citeyear{#1}]}
\newtheorem{proposition}{Proposition}
\title{Pre-training via Denoising\\for Molecular Property Prediction}
\newcommand{\printfnsymbol}[1]{%
  \textsuperscript{\@fnsymbol{#1}}%
}
\author{Sheheryar Zaidi$^{*\dagger\mathsection}$ \; Michael Schaarschmidt$^{*\ddagger}$\\
\textbf{James Martens$^{\ddagger}$ \; Hyunjik Kim$^{\ddagger}$ \; Yee Whye Teh$^{\ddagger}$ \; Alvaro Sanchez-Gonzalez$^{\ddagger}$}\\
\textbf{Peter Battaglia$^{\ddagger}$ \; Razvan Pascanu$^{\ddagger}$ \; Jonathan Godwin$^{\ddagger}$}\\
$^{\dagger}$University of Oxford, $^{\ddagger}$DeepMind \\
\printfnsymbol{1}Equal contribution. $^{\mathsection}$Work done during an internship at DeepMind. \\
Correspondence to: \texttt{szaidi@stats.ox.ac.uk, mschaarschmidt@google.com}.
}
\begin{document}

\maketitle

\begin{abstract}
Many important problems involving molecular property prediction from 3D structures have limited data, posing a generalization challenge for neural networks. In this paper, we describe a pre-training technique based on denoising that achieves a new state-of-the-art in molecular property prediction by utilizing large datasets of 3D molecular structures at equilibrium to learn meaningful representations for downstream tasks. Relying on the well-known link between denoising autoencoders and score-matching, we show that the denoising objective corresponds to learning a molecular force field -- arising from approximating the Boltzmann distribution with a mixture of Gaussians -- directly from equilibrium structures. Our experiments demonstrate that using this pre-training objective significantly improves performance on multiple benchmarks, achieving a new state-of-the-art on the majority of targets in the widely used QM9 dataset. Our analysis then provides practical insights into the effects of different factors -- dataset sizes, model size and architecture, and the choice of upstream and downstream datasets -- on pre-training.
\end{abstract}

\section{Introduction}

The success of the best performing neural networks in vision and natural language processing (NLP) relies on pre-training the models on large datasets to learn meaningful features for downstream tasks \citep{NIPS2015_7137debd, https://doi.org/10.48550/arxiv.1409.1556, DBLP:journals/corr/abs-1810-04805, NEURIPS2020_1457c0d6,https://doi.org/10.48550/arxiv.2010.11929}. For molecular property prediction from 3D structures (a point cloud of atomic nuclei in $\mathbb{R}^3$), the problem of how to similarly learn such representations remains open. For example, none of the best models on the widely used QM9 benchmark use any form of pre-training \citep[\textit{e.g.}][]{Klicpera2020Dimenet++, Liu2021SphericalMP, schutt2021equivariant, tholke2022torchmd}, in stark contrast with vision and NLP. Effective methods for pre-training could have a significant impact on fields such as drug discovery and material science.

In this work, we focus on the problem of how large datasets of 3D molecular structures can be utilized to improve performance on downstream molecular property prediction tasks that also rely on 3D structures as input. We address the question: how can one exploit large datasets like \pcq{},\footnote{Note that \pcq{} is a new version of PCQM4M that now offers 3D structures.} that contain over 3 million structures, to improve performance on datasets such as DES15K that are orders of magnitude smaller? Our answer is a form of self-supervised pre-training that generates useful representations for downstream prediction tasks, leading to state-of-the-art (SOTA) results.

\begin{figure}[t]
\begin{minipage}{0.4\textwidth}
    \includegraphics[scale=0.56,trim={5.0cm 2.3cm 6.1cm 2.4cm},clip]{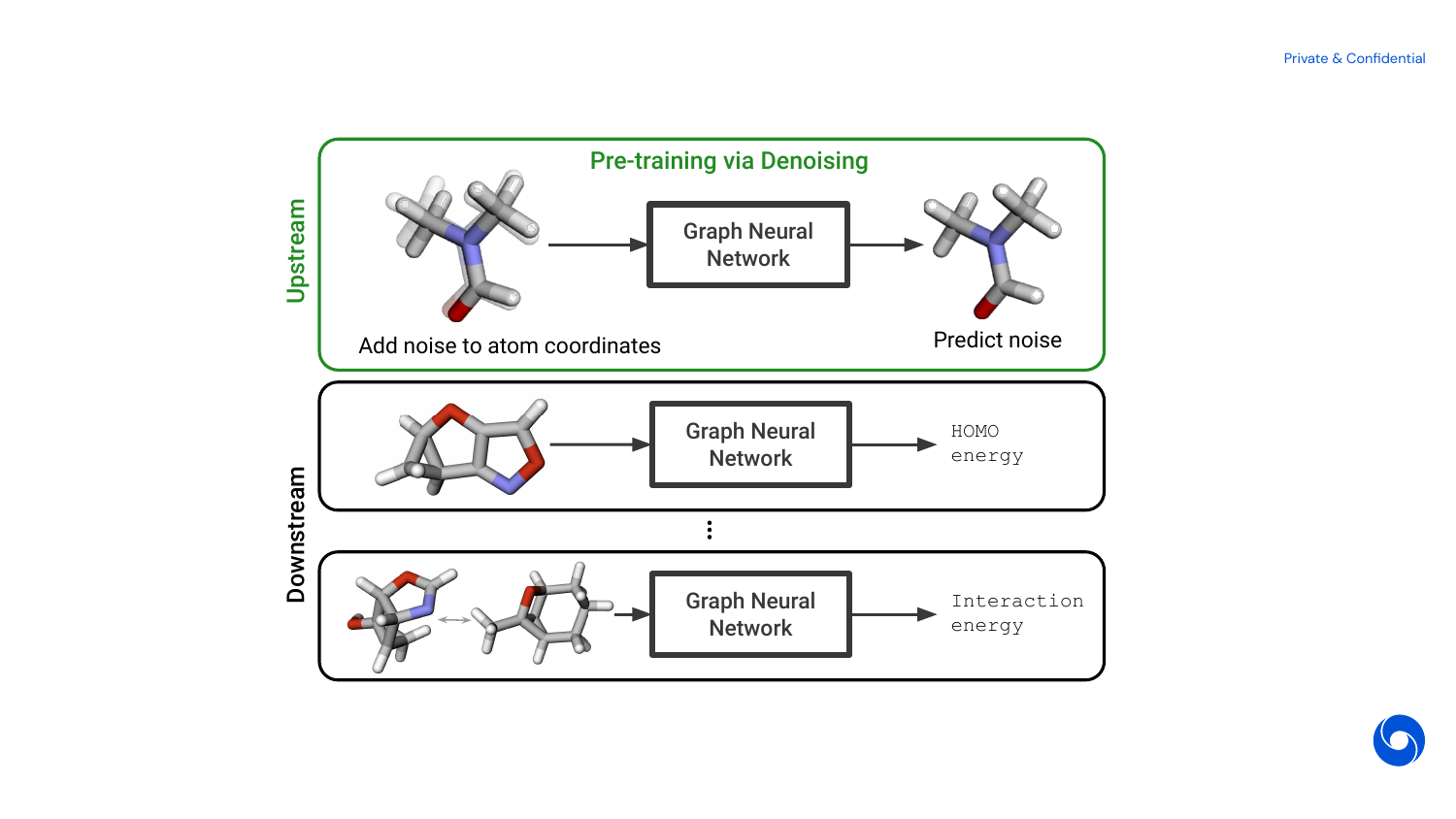}
\end{minipage}
\hfill
\begin{minipage}{0.4\textwidth}
     \includegraphics[scale=0.4,trim={0.6cm 0.6cm 0cm 0cm},clip]{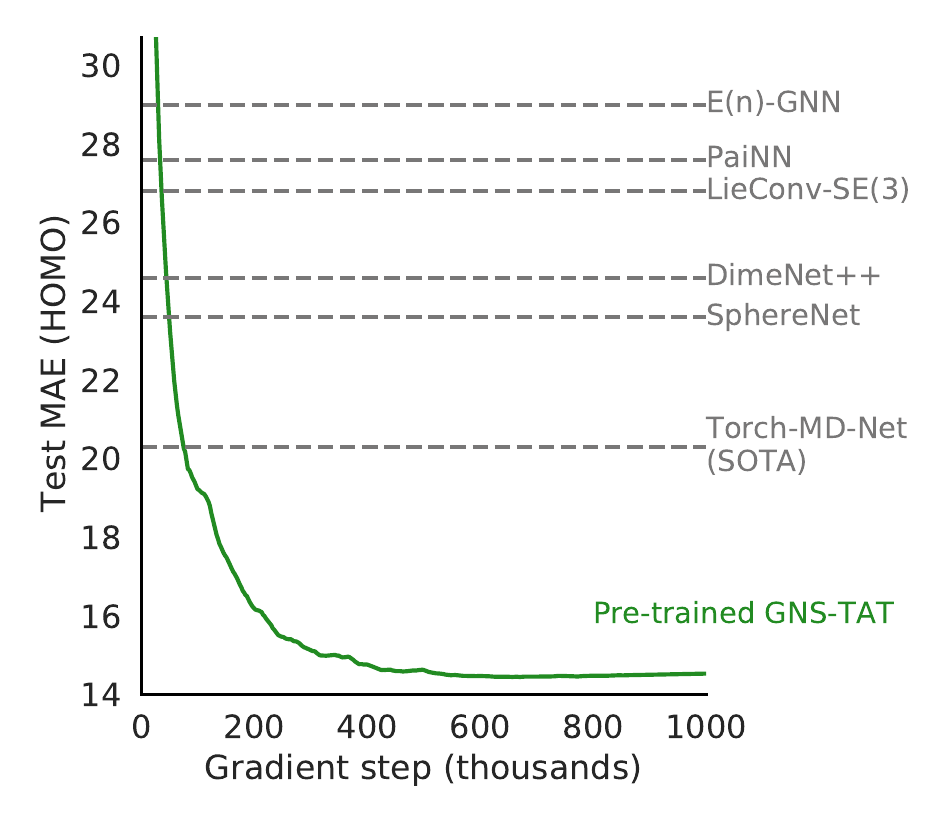}
\end{minipage}
 \caption{GNS-TAT pre-trained via denoising on \pcq{} outperforms prior work on QM9.}
    \label{fig:intro}
    \vspace{-1mm}
\end{figure}

Inspired by recent advances in noise regularization for graph neural networks (GNNs) \citep{godwin2022simple}, our pre-training objective is based on denoising in the space of structures (and is hence self-supervised). Unlike existing pre-training methods, which largely focus on 2D graphs, our approach targets the setting where the downstream task involves 3D point clouds defining the molecular structure. Relying on the well-known connection between denoising and score-matching \citep{VincentConnection, song2019, ho2020denoising}, we show that the denoising objective is equivalent to learning a particular force field, adding a new interpretation of denoising in the context of molecules and shedding light on how it aids representation learning. 

The contributions of our work are summarized as follows:

\begin{itemize}[topsep=1pt,itemsep=1pt,leftmargin=16pt]
    \item We investigate a simple and effective method for pre-training via denoising in the space of 3D structures with the aim of improving downstream molecular property prediction from such 3D structures. Our denoising objective is shown to be related to learning a specific force field.
    \item Our experiments demonstrate that pre-training via denoising significantly improves performance on multiple challenging datasets that vary in size, nature of task, and molecular composition. This establishes that denoising over structures successfully transfers to molecular property prediction, setting, in particular, a new state-of-the-art on 10 out of 12 targets in the widely used QM9 dataset. Figure \ref{fig:intro} illustrates performance on one of the targets in QM9. 
    \item We make improvements to a common GNN, in particular showing how to apply Tailored Activation Transformation (TAT) \citep{tat2022} to Graph Network Simulators (GNS) \citep{pmlr-v119-sanchez-gonzalez20a}, which is complementary to pre-training and further boosts performance.
    \item We analyze the benefits of pre-training by gaining insights into the effects of dataset size, model size and architecture, and the relationship between the upstream and downstream datasets.
\end{itemize}

\vspace{-3mm}
\section{Related Work}\label{sec:related-work}
\vspace{-2mm}

\textbf{Pre-training of GNNs.}
Various recent works have formulated methods for pre-training using graph data \citep{liu2021graphssl,Hu2020StrategiesFP,xie2021sslgraph, Kipf2016VariationalGA}, rather than 3D point clouds of atom nuclei as in this paper. Approaches based on contrastive methods rely on learning representations by contrasting different \textit{views} of the input graph \citep{sun2019infograph,velickovic2018deep,You2020GraphCL,liu2021graphmvp}, or bootstrapping \citep{thakoor2021bgrl}. Autoregressive or reconstruction-based approaches, such as ours, learn representations by requiring the model to predict aspects of the input graph \citep{Hu2020StrategiesFP,gpt-gnn2020,grover2020, Liu2019NGramGS}. Most methods in the current literature are not designed to handle 3D structural information, focusing instead on 2D graphs. The closest work to ours is GraphMVP \citep{liu2021graphmvp}, where 3D structure is treated as one view of a 2D molecule for the purpose of upstream contrastive learning. Their work focuses on downstream tasks that only involve 2D information, while our aim is to improve downstream models for molecular property prediction from 3D structures. After the release of this pre-print, similar ideas have been studied by \citet{jiao20223d} and \citet{liu2022molecular}.

\textbf{Denoising, representation learning and score-matching.}
Noise has long been known to improve generalization in machine learning \citep{Sietsma1991CreatingAN, Bishop1995TrainingWN}. Denoising autoencoders have been used to effectively learn representations by mapping corrupted inputs to original inputs \citep{Vincent2008ExtractingAC, Vincent2010StackedDA}. Specific to GNNs \citep{Battaglia2018RelationalIB, Scarselli2009GN,bronstein2017geometric}, randomizing input graph features has been shown to improve performance \citep{Hu2020StrategiesFP, Sato2021RandomFS}. Applications to physical simulation also involve corrupting the state with Gaussian noise \citep{SanchezGonzalez2018GraphNA, pmlr-v119-sanchez-gonzalez20a, Pfaff2020LearningMS}. Our work builds on Noisy Nodes \citep{godwin2022simple}, which incorporates denoising as an auxiliary task to improve performance, indicating the effectiveness of denoising for molecular property prediction (\textit{cf.} \cref{sec:noisy-nodes}). Denoising is also closely connected to score-matching \citep{VincentConnection}, which has become popular for generative modelling \citep{song2019, song2020, ho2020denoising, hoogeboom2022, xu2022geodiff, shi2021confgf}. We also rely on this connection to show that denoising structures corresponds to learning a force field. 

\textbf{Equivariant neural networks for 3D molecular property prediction.}
Recently, the dominant approach for improving molecular property prediction from 3D structures has been through the design of architectures that incorporate roto-translational inductive biases into the model, such that the outputs are invariant to translating and rotating the input atomic positions. A simple way to achieve this is to use roto-translation invariant features as inputs, such as inter-atomic distances \citep{Schtt2017SchNetAC, Unke_2019}, angles \citep{Klicpera2020DirectionalMP, Klicpera2020Dimenet++,shuaibi2021rotation,Liu2021SphericalMP}, or the principal axes of inertia \citep{godwin2022simple}. There is also broad literature on equivariant neural networks, whose intermediate activations transform accordingly with roto-translations of inputs thereby naturally preserving inter-atomic distance and orientation information. Such models can be broadly categorized into those that are specifically designed for molecular property prediction \citep{tholke2022torchmd,schutt2021equivariant,Batzner2021SE3EquivariantGN,Anderson2019CormorantCM,miller2020relevance} and general-purpose architectures \citep{satorras2021en, finzi2020generalizing,hutchinson2021lietransformer,Thomas2018TensorField,Kondor2018Covariant,brandstetter2021geometric}. Our pre-training technique is architecture-agnostic, and we show that it can be applied to enhance performance in both a GNN-based architecture \citep{pmlr-v119-sanchez-gonzalez20a} and a Transformer-based one \citep{tholke2022torchmd}. We conjecture that similar improvements will hold for other models.

\vspace{-2mm}
\section{Methodology}

\vspace{-2mm}
\subsection{Problem Setup}
Molecular property prediction consists of predicting scalar quantities given the structure of one or more molecules as input. Each data example is a labelled set specified as follows: we are provided with a set of atoms $\inputmolset = \{(a_1, \pos_1), \dots,  (a_\nummol, \pos_\nummol) \}$, where $a_i \in \{1, \dots, 118\}$ and $\pos_i \in \R^3$ are the atomic number and 3D position respectively of atom $i$ in the molecule, alongside a label $\truelabel \in \R$. We assume that the model, which takes $\inputmolset$ as input, is any architecture consisting of a backbone, which first processes $\inputmolset$ to build a latent representation of it, followed by a vertex-level or graph-level ``decoder'', that returns per-vertex predictions or a single prediction for the input respectively.

\vspace{-1mm}
\subsection{Pre-training via Denoising}\label{sec:denoising}

Given a dataset of molecular structures, we pre-train the network by denoising the structures, which operates as follows. Let $\Dus = \{S_1, \dots, S_\ussize\}$ denote the upstream dataset of equilibrium structures, and let $\gnn_\theta$ denote a graph neural network with parameters $\theta$ which takes $S \in \Dus$ as input and returns per-vertex predictions $\gnn_\theta(S) = (\hat{\noise}_1, \dots, \hat{\noise}_{\left| S \right|})$. The precise parameterization of the models we consider in this work is described in \cref{sec:gns,sec:arch-details}. 

Starting with an input molecule $\inputmolset \in \Dus$, we perturb it by adding i.i.d. Gaussian noise to its atomic positions $\pos_i$. That is, we create a noisy version of the molecule:
\begin{align}
    \noisyinputmolset = \{(a_1, \noisypos_1), \dots,  (a_\nummol, \noisypos_\nummol) \}, 
    \text{\, where\, } \noisypos_i = \pos_i + \poscoeff \noise_i \text{ and } \noise_i \sim \mathcal{N}(0, I_3),\label{eq:noising}
\end{align}
The noise scale $\poscoeff$ is a tuneable hyperparameter (an interpretation of which is given in \cref{sec:force-field}). We train the model as a denoising autoencoder by minimizing the following loss with respect to $\theta$:
\begin{align}
    \E_{p(\noisyinputmolset, \inputmolset)} \left[ \norm{\gnn_\theta(\noisyinputmolset) - (\noise_1, \dots, \noise_\nummol)}^2 \right].
    \label{eq:denoising-loss}
\end{align}
The distribution $p(\noisyinputmolset, \inputmolset)$ corresponds to sampling a structure $\inputmolset$ from $\Dus$ and adding noise to it according to \cref{eq:noising}.
Note that the model predicts the noise, not the original coordinates. Next, we motivate denoising as our pre-training objective for molecular modelling.

\subsubsection{Denoising as Learning a Force Field}\label{sec:force-field}

Datasets in quantum chemistry are typically generated by minimizing expensive-to-compute interatomic forces with methods such as density functional theory (DFT) \citep{Parr1994}. 
We speculate that learning this \textit{force field} would give rise to useful representations for downstream tasks, since molecular properties vary with forces and energy. Therefore, a reasonable pre-training objective would be one that involves learning the force field.
Unfortunately, this force field is either unknown or expensive to evaluate, and hence it cannot be used directly for pre-training. An alternative is to approximate the data-generating force field with one that can be cheaply evaluated and use it to learn good representations -- an approach we outline in this section.
Using the well-known link between denoising autoencoders and score-matching \citep{VincentConnection, song2019, song2020}, we can show that the denoising objective in \cref{eq:denoising-loss} is equivalent to learning a particular force field directly from equilibrium structures with some desirable properties. For clarity, in this subsection we condition on and suppress the atom types and molecule size in our notation, specifying a molecular structure by its coordinates $\x \in \R^{3N}$ (with $N$ as the size of the molecule). 

From the perspective of statistical physics, a structure $\x$ can be treated as a random quantity sampled from the Boltzmann distribution $\pphysics(\x) \propto \exp(-E(\x))$, where $E(\x)$ is the (potential) energy of $\x$. According to $\pphysics$, low energy structures have a high probability of occurring. Moreover, the per-atom forces are given by $\nabla_{\x} \log \pphysics(\x) = -\nabla_{\x} E(\x)$, which is referred to as the force field. Our goal is to learn this force field. However both the energy function $E$ and distribution $\pphysics$ are unknown, and we only have access to a set of equilibrium structures $\x_1, \dots,\x_n$ that locally minimize the energy $E$. 
Since $\x_1, \dots,\x_n$ are then local maxima of the distribution $\pphysics$, our main approximation is to replace $\pphysics$ with a mixture of Gaussians centered at the data: 
\begin{align*}
    \pphysics(\xtilde) \approx  q_\poscoeff(\xtilde) \defeq \frac{1}{n} \sum_{i=1}^n q_\poscoeff(\xtilde \mid \x_i), 
\end{align*}
where we define $q_\poscoeff(\xtilde \mid \x_i) = \mathcal{N}(\xtilde; \x_i, \poscoeffsq I_{3N})$. This approximation captures the fact that $\pphysics$ will have local maxima at the equilibrium structures, vary smoothly with $\x$ and is computationally convenient. Learning the force field corresponding to $q_\poscoeff(\xtilde)$ now yields a score-matching objective:
\begin{align}
    \E_{q_\poscoeff(\xtilde) } \left[ \norm{\gnn_\theta(\xtilde) - \nabla_{\xtilde} \log q_\poscoeff(\xtilde)}^2 \right]. \label{eq:score-matching-noisy-data}
\end{align}
As shown by \citet{VincentConnection}, and recently applied to generative modelling \citep{song2019, song2020, ho2020denoising, shi2021confgf, xu2022geodiff}, this objective is equivalent to the denoising objective. Specifically, defining $q_0(\x) = \frac{1}{n} \sum_{i = 1}^n \delta(\x = \x_i)$ to be the empirical distribution and $q_\poscoeff(\xtilde, \x) = q_\poscoeff(\xtilde \mid \x) q_0(\x)$, the objective in \cref{eq:score-matching-noisy-data} is equivalent to:
\begin{align}
    \E_{q_\poscoeff(\xtilde, \x)} \left[ \norm{\gnn_\theta(\xtilde) - \nabla_{\xtilde} \log q_\poscoeff(\xtilde \mid \x)}^2 \right] = \E_{q_\poscoeff(\xtilde, \x)} \left[ \norm{\gnn_\theta(\xtilde) - \frac{\x - \xtilde}{\poscoeffsq}}^2 \right]. \label{eq:score-matching-denoising}
\end{align}
We notice that the RHS corresponds to the earlier denoising loss in \cref{eq:denoising-loss} (up to a constant factor of $1/\poscoeff$ applied to $\gnn_\theta$ that can be absorbed into the network). To summarize, denoising equilibrium structures corresponds to learning the force field that arises from approximating the distribution $\pphysics$ with a mixture of Gaussians. Note that we can interpret the noise scale $\poscoeff$ as being related to the sharpness of $\pphysics$ or $E$ around the local maxima $\x_i$. We also remark that the equivalence between \cref{eq:score-matching-noisy-data} and the LHS of \cref{eq:score-matching-denoising} does not require $q_\poscoeff(\xtilde \mid \x_i)$ to be a Gaussian distribution \citep{VincentConnection}, and other choices will lead to different denoising objectives, which we leave as future work. See \cref{sec:force-field-app} for technical caveats and details.

\subsubsection{Noisy Nodes: Denoising as an Auxiliary Loss}\label{sec:noisy-nodes}

Recently, \citet{godwin2022simple} also applied denoising as an \textit{auxiliary} loss to molecular property prediction, achieving significant improvements on a variety of molecular datasets. In particular, their approach, called Noisy Nodes, consisted of augmenting the usual optimization objective for predicting $\truelabel$ with an auxiliary denoising loss. They suggested two explanations for why Noisy Nodes improves performance. First, the presence of a vertex-level loss discourages \textit{oversmoothing} \citep{Chen2019Oversmoothing, Cai2020Oversmoothing}
of vertex/edge features after multiple message-passing layers -- a common problem plaguing GNNs -- because successful denoising requires diversity amongst vertex features in order to match the diversity in the noise targets $\noise_i$. Second, they argued that denoising can aid representation learning by encouraging the network to learn aspects of the input distribution. 

The empirical success of Noisy Nodes indicates that denoising can indeed result in meaningful representations. Since Noisy Nodes incorporates denoising only as an auxiliary task, the representation learning benefits of denoising are limited to the downstream dataset on which it is used as an auxiliary task. Our approach is to apply denoising as a pre-training objective on another large (unlabelled) dataset of structures to learn higher-quality representations, which results in better performance.

\subsection{GNS and GNS-TAT}\label{sec:gns}

The main two models we consider in this work are Graph Net Simulator (GNS) \citep{pmlr-v119-sanchez-gonzalez20a}, which is a type of GNN, and a better-performing variant we contribute called GNS-TAT. GNS-TAT makes use of a recently published network transformation method called Tailored Activation Transforms (TAT) \citep{tat2022}, which has been shown to prevent certain degenerate behaviors at initialization in deep MLPs/convnets that are reminiscent of oversmoothing in GNNs (and are also associated with training difficulties). While GNS is not by default compatible with the assumptions of TAT, we propose a novel GNN initialization scheme called ``Edge-Delta'' that makes it compatible by initializing to zero the weights that carry ``messages'' from vertices to edges. This marks the first application of TAT to any applied problem in the literature. See \cref{sec:arch-details} for details.

\vspace{-1mm}
\section{Experiments}
\vspace{-1mm}

\begin{figure}[t]
\vspace{-10pt}
\begin{center}
  \makebox[\textwidth]{\includegraphics[width=0.95\textwidth]{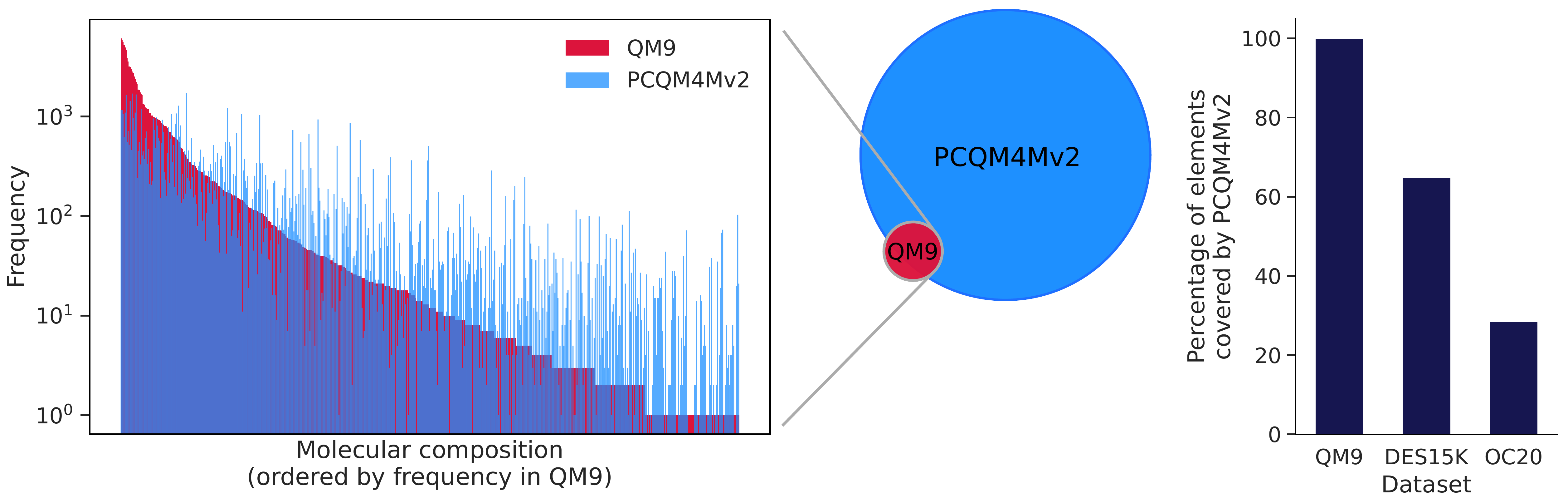}}
\end{center}
\vspace{-5pt}
\caption{\textbf{Left:} Frequency of compositions of molecules appearing in QM9 overlayed with the corresponding frequency in \pcq{}. Each bar represents one molecular composition (\textit{e.g.} one carbon atom, two oxygen atoms). \textbf{Right:} Percentage of elements appearing in QM9, DES15K, OC20 that also appear in \pcq{}.}\label{fig:dataset-stats}
\end{figure}

The goal of our experimental evaluation in this section is to answer the following questions. First, does pre-training a neural network via denoising improve performance on the downstream task compared to training from a random initialization? Second, how does the benefit of pre-training depend on the relationship between the upstream and downstream datasets? Our evaluation involves four realistic and challenging molecular datasets, which vary in size, compound compositions (organic or inorganic) and labelling methodology (DFT- or CCSD(T)-generated), as described below. 

\vspace{-1mm}
\subsection{Datasets and Training Setup}\label{sec:datasets}
\vspace{-1mm}

\textbf{Datasets.} First, the main dataset we use for pre-training is \textbf{\pcq{}} \citep{pcq2017}, which contains 3.4 million organic molecules, specified by their 3D structures at equilibrium calculated using DFT.\footnote{An earlier version of this dataset without any 3D structures, called PCQM4M, was used for supervised pre-training \citep{Ying2021DoTR}, but to our knowledge, this is the first time the 3D structures from v2 have been used and in a self-supervised manner.} The molecules in \pcq{} only contain one label, however the labels are not used as denoising only requires the structures. The large scale and diversity of \pcq{} makes it well-suited for pre-training via denoising. Second, as a dataset for fine-tuning, we use \textbf{QM9} \citep{Ramakrishnan2014QuantumCS}, which contains around 130,000 small organic molecules and is widely used as a molecular property prediction benchmark \citep{Klicpera2020Dimenet++, Fuchs2020SE3Transformers3R, satorras2021en, finzi2020generalizing, hutchinson2021lietransformer, schutt2021equivariant, tholke2022torchmd, godwin2022simple}. Each molecule is specified by its structure alongside 12 associated molecular property labels. Third, \textbf{Open Catalyst 2020 (OC20)} \citep{Chanussot2020TheOC} is a recent large benchmark of interacting surfaces and adsorbates relevant to catalyst discovery. OC20 contains various tasks, such as predicting the relaxed state energy from an initial high-energy structure (IS2RE). We explore different combinations of upstream and downstream tasks as described in \cref{sec:oc20}. Lastly, \textbf{DES15K} \citep{Donchev2021QuantumCB} is a small dataset we use for fine-tuning, which contains around 15,000 dimer geometries (\textit{i.e.} molecule pairs) with non-covalent molecular interactions. Each pair is labelled with its interaction energy computed using the gold-standard CCSD(T) method \citep{ccsdt}. CCSD(T) is usually both more expensive and accurate than DFT, which is used for all aforementioned datasets. See \cref{sec:datasets-app} for further details and a discussion about the choice of using DFT-generated structures for pre-training.

\cref{fig:dataset-stats} (right) shows what percentage of elements appearing in each of QM9, OC20 and DES15K also appear in \pcq{}. Whereas QM9 is fully covered by \pcq{}, we observe that DES15K has less element overlap with \pcq{} and less than < 30\% of elements in OC20 are contained in \pcq{}. This is owing to the fact that surface molecules in OC20 are inorganic lattices, none of which appear in \pcq{}. This suggests that we can expect least transfer from \pcq{} to OC20. We also compare \pcq{} and QM9 in terms of the molecular compositions, \textit{i.e.} the number of atoms of each element, that appear in each. Due to presence of isomers, both datasets contain multiple molecules with the same composition. For each molecular composition in QM9, \cref{fig:dataset-stats} (left)  shows its frequency in both QM9 and \pcq{}. We observe that most molecular compositions in QM9 also appear in \pcq{}. We also remark that since pre-training is self-supervised using only unlabelled structures, test set contamination is not possible -- in fact, \pcq{} does not have most of the labels in QM9.

\textbf{Training setup.} GNS/GNS-TAT were implemented in JAX \citep{jax2018github} using Haiku and Jraph \citep{haiku2020github, jraph2020github}. All experiments were averaged over 3 seeds. Detailed hyperparameter and hardware settings can be found in \cref{sec:hyperparameters,sec:hardware-training}.

\begin{table*}[t]
\caption{Results on QM9 comparing the performance of GNS-TAT + Noisy Nodes (NN) with and without pre-training on \pcq{} (averaged over three seeds) with other baselines.}\label{tbl:qm9-results}
\vspace{1mm}
\setlength\tabcolsep{2pt}
\centering
\resizebox{\textwidth}{!}{%
\begin{tabular}{llcccccccc>{\columncolor{gray!20}}c}
\toprule
Target &
  Unit &
  SchNet &
  E(n)-GNN &
  DimeNet++ &
  SphereNet &
  PaiNN &
  TorchMD-NET &
  \begin{tabular}[c]{@{}c@{}}GNS\\+ NN \end{tabular} & 
  \begin{tabular}[c]{@{}c@{}}GNS-TAT\\+ NN \end{tabular} &
  \begin{tabular}[c]{@{}c@{}}Pre-trained\\GNS-TAT\\+ NN\end{tabular} \\ \midrule
$\mu$                  & \si{\debye}              & 0.033 & 0.029 & 0.030 & 0.027 & 0.012 & \textbf{0.011} & 0.025 & 0.021 & 0.016                                \\
$\alpha$               & \si{\bohr^3}             & 0.235 & 0.071 & 0.043 & 0.047 & 0.045 & 0.059 & 0.052 & 0.047 & {\color[HTML]{000000} \textbf{0.040}}          \\
$\epsilon_\text{HOMO}$ & \si{\milli\electronvolt} & 41.0  & 29.0  & 24.6  & 23.6  & 27.6  & 20.3   & 20.4    & 17.3  & {\color[HTML]{000000} \textbf{14.9}}  \\
$\epsilon_\text{LUMO}$ & \si{\milli\electronvolt} & 34.0  & 25.0  & 19.5  & 18.9  & 20.4 & 18.6  & 17.5           & 17.1  & {\color[HTML]{000000} \textbf{14.7}}  \\
$\Delta\epsilon$       & \si{\milli\electronvolt} & 63.0  & 48.0  & 32.6  & 32.3  & 45.7  & 36.1     & 28.6       & 25.7 & {\color[HTML]{000000} \textbf{22.0}}  \\
$\left< R^2 \right>$ &
  \si{\bohr^2} &
  0.07 &
  0.11 &
  0.33 &
  0.29 &
  0.07 &
  {\color[HTML]{000000} \textbf{0.033}} & 0.70 &
  0.65 &
  {\color[HTML]{000000} 0.44} \\
ZPVE                   & \si{\milli\electronvolt} & 1.700 & 1.550 & 1.210 & 1.120 & 1.280 & 1.840    & 1.160      & 1.080 & {\color[HTML]{000000} \textbf{1.018}} \\
$U_0$                  & \si{\milli\electronvolt} & 14.00 & 11.00 & 6.32  & 6.26  & 5.85  & 6.15   & 7.30        & 6.39 & {\color[HTML]{000000} \textbf{5.76}}  \\
$U$                    & \si{\milli\electronvolt} & 19.00 & 12.00 & 6.28  & 7.33  & 5.83  & 6.38  & 7.57         & 6.39 & {\color[HTML]{000000} \textbf{5.76}}  \\
$H$                    & \si{\milli\electronvolt} & 14.00 & 12.00 & 6.53  & 6.40  & 5.98  & 6.16    & 7.43   & 6.42 & {\color[HTML]{000000} \textbf{5.79}}  \\
$G$                    & \si{\milli\electronvolt} & 14.00 & 12.00 & 7.56  & 8.00  & 7.35  & 7.62    & 8.30       & 7.41  & {\color[HTML]{000000} \textbf{6.90}}  \\
$c_\text{v}$ &
  \si[per-mode=fraction]{\cal\per\mol\per\kelvin} &
  0.033 &
  0.031 &
  0.023 &
  0.022 &
  0.024 &
  0.026 &
  0.025 &  
  0.022 &
  {\color[HTML]{000000} \textbf{0.020}} \\ \bottomrule
\end{tabular}%
}
\end{table*} 

\vspace{-2mm}
\subsection{Results on QM9}
\vspace{-1mm}

We evaluate two variants of our model on QM9 in \cref{tbl:qm9-results}, GNS-TAT with Noisy Nodes trained from a random initialization versus pre-trained parameters. Pre-training is done on \pcq{} via denoising. For best performance on QM9, we found that using atom type masking and prediction during pre-training additionally helped \citep{Hu2020StrategiesFP}. We fine-tune a separate model for each of the 12 targets, as usually done on QM9, using a single pre-trained model. This is repeated for three seeds (including pre-training). Following customary practice, hyperparameters, including the noise scale for denoising during pre-training and fine-tuning, are tuned on the HOMO target and then kept fixed for all other targets. We first observe that GNS-TAT with Noisy Nodes performs competitively with other models and significantly improves upon GNS with Noisy Nodes, revealing the benefit of the TAT modifications. Utilizing pre-training then further improves performance across all targets, achieving a new state-of-the-art compared to prior work for 10 out of 12 targets. Interestingly, for the electronic spatial extent target $\left< R^2 \right>$, we found GNS-TAT to perform worse than other models, which may be due to the optimal noise scale being different from that of other targets.

\vspace{-1mm}
\subsection{Results on OC20}\label{sec:oc20}

\vspace{-1mm}
Next, we consider the Open Catalyst 2020 benchmark focusing on the downstream task of predicting the relaxed energy from the initial structure (IS2RE). We compared GNS with Noisy Nodes trained from scratch versus using pre-trained parameters. We experimented with two options for pre-training: (1) pre-training via denoising on \pcq{}, and (2) pre-training via denoising on OC20 itself. For the latter, we follow \poscite{godwin2022simple} approach of letting the denoising target be the relaxed structure, while the perturbed input is a random interpolation between the initial and relaxed structures with added Gaussian noise -- this corresponds to the IS2RS task with additional noise. As shown in Figure \ref{oc-desk-size} (left), pre-training on \pcq{} offers no benefit for validation performance on IS2RE, however pre-training on OC20 leads to considerably faster convergence but the same final performance. The lack of transfer from \pcq{} to OC20 is likely due to the difference in nature of the two datasets and the small element overlap as discussed in \cref{sec:datasets,fig:dataset-stats} (right). On the other hand, faster convergence from using parameters pre-trained on OC20 suggests that denoising learned meaningful features. Unsurprisingly, the final performance is unchanged since the upstream and downstream datasets are the same in this case, so pre-training with denoising is identical to the auxiliary task of applying Noisy Nodes. The performance achieved is also competitive with other models in the literature as shown in \cref{tab:oc20-baselines}.

\begin{figure}[t]
\vspace{-5pt}
\begin{center}
  \makebox[\textwidth]{\includegraphics[width=0.95\textwidth]{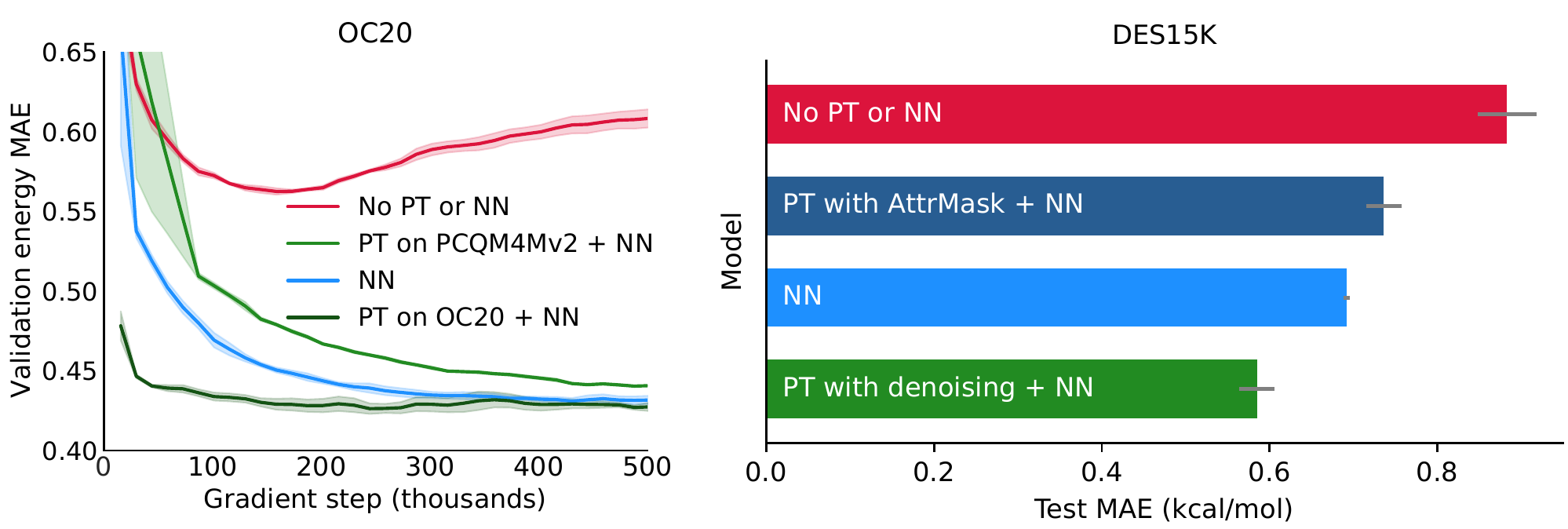}}
\end{center}
\vspace{-5pt}
\caption{\textbf{Left:} Validation performance curves on the OC20 IS2RE task (\texttt{ood\_both} split) See \cref{tab:oc20-baselines} for a comparison to other models in the literature. \textbf{Right:} Test performance curves for predicting interaction energies of dimer geometries in the DES15K dataset. ``PT'' and ``NN'' stand for pre-training and Noisy Nodes respectively. \label{oc-desk-size}} 
\end{figure}

\vspace{-1mm}
\subsection{Results on DES15K}\label{sec:des15k}

\vspace{-1mm}
In our experiments so far, all downstream tasks were based on DFT-generated datasets. While DFT calculations are more expensive than using neural networks, they are relatively cheap compared to even higher quality methods such as CCSD(T) \citep{ccsdt}. 
In this section, we evaluate how useful pre-training on DFT-generated structures from \pcq{} is when fine-tuning on the recent DES15K dataset which contains higher quality CCSD(T)-generated interaction energies. Moreover, unlike QM9, inputs from DES15K are systems of two interacting molecules and the dataset contains only around 15,000 examples, rendering it more challenging. We compare the test performance on DES15K achieved by GNS-TAT with Noisy Nodes when trained from scratch versus using pre-trained parameters from \pcq{}. As a baseline, we also include pre-training on \pcq{} using 2D-based AttrMask \citep{Hu2020StrategiesFP} by masking and predicting atomic numbers. \cref{oc-desk-size} (right) shows that using Noisy Nodes significantly improves performance compared to training from scratch, with a further improvement resulting from using pre-training via denoising. AttrMask underperforms denoising since it likely does not fully exploit the 3D structural information.
Importantly, this shows that pre-training by denoising structures obtained through relatively cheap methods such as DFT can even be beneficial when fine-tuning on more expensive and smaller downstream datasets. See \cref{sec:des15k-tmdn} for similar results on another architecture.

\section{Analysis}

\subsection{Pre-training a Different Architecture}\label{sec:gns-tat-tmdn}
To explore whether pre-training is beneficial beyond GNS/GNS-TAT, we applied pre-training via denoising to the \tmdn{} architecture \citep{tholke2022torchmd}. \tmdn{} is a transformer-based architecture whose layers maintain per-atom scalar features $x_i \in \R^F$ and vector features $v_i \in \R^{3 \times F}$, where $F$ is the feature dimension, that are updated in each layer using a self-attention mechanism. We implemented denoising by using gated equivariant blocks \citep{Weiler20183DSC, schutt2021equivariant} applied to the processed scalar and vector features. The resulting vector features are then used as the noise prediction. 

\begin{wraptable}[8]{R}{.47\textwidth}
\vspace{-7mm}
\caption{Performance of TorchMD-NET with Noisy Nodes and pre-training on \pcq{}.\label{tab:tmdn}}
\centering
\begin{tabular}{@{}lcc@{}}
\toprule
Method& $\epsilon_\text{HOMO}$ & $\epsilon_\text{LUMO}$ \\ \midrule
\tmdn{} & $22.0 \pm 0.6$ & $18.7 \pm 0.4$ \\
\, + Noisy Nodes & $18.1 \pm 0.1$ & $15.6 \pm 0.1$ \\
\, + Pre-training & $\textbf{15.6} \pm 0.1$ & $\textbf{13.2} \pm 0.2$ \\ \bottomrule
\end{tabular}
\end{wraptable}

In \cref{tab:tmdn}, we evaluate the effect of adding Noisy Nodes and pre-training to the architecture on the HOMO and LUMO targets in QM9. Pre-training yields a boost in performance, allowing the model to achieve SOTA results. Note that the results shown for \tmdn{} are from our runs using \poscite{tholke2022torchmd} open-source code, which led to slightly worse results than their published ones (our pre-training results still outperform their published results). Our code for experiments on \tmdn{} is open-source.\footnote{GitHub repository: \url{https://github.com/shehzaidi/pre-training-via-denoising}.}

\begin{figure}[t]
\begin{center}
  \makebox[\textwidth]{\includegraphics[width=1.\textwidth]{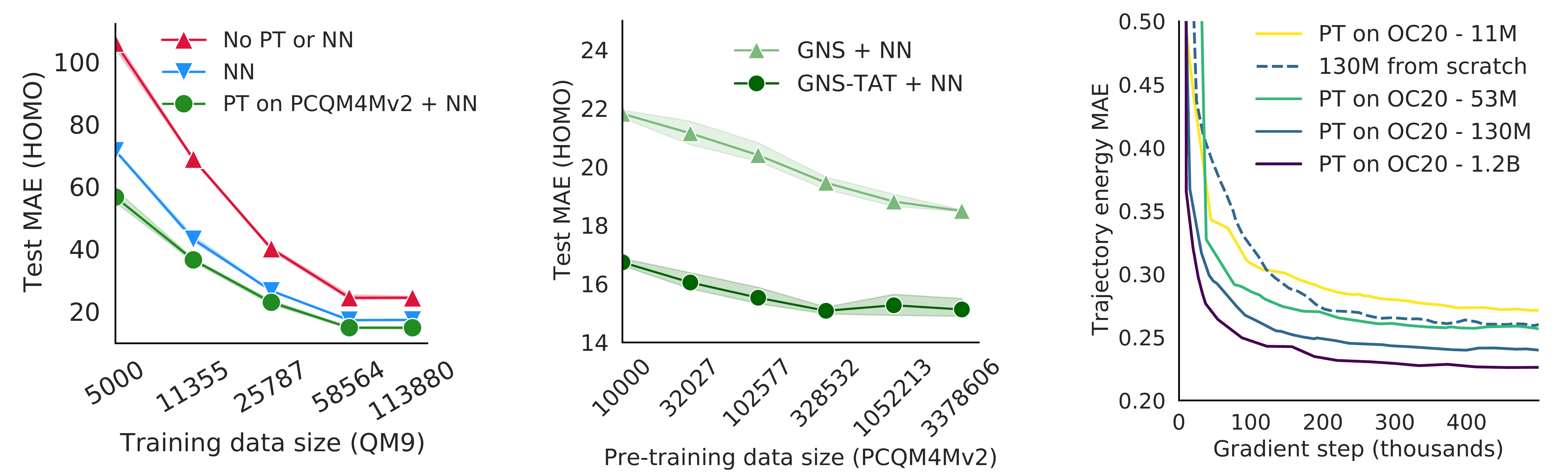}}
\end{center}
\vspace{-5pt}
\caption{\textbf{Left:} Impact of varying the downstream dataset size for the HOMO target in QM9 with GNS-TAT. \textbf{Middle:} Impact of varying the upstream dataset size for the HOMO target in QM9. \textbf{Right:} Validation performance curves on the OC20 S2EF task (\texttt{ood\_both} split) for different model sizes. ``PT'' and ``NN'' stand for pre-training and Noisy Nodes respectively. \label{pcqsize-scale-decoder}} 
\vspace{-1mm}
\end{figure}

\vspace{-1mm}
\subsection{Varying Dataset Sizes}

\vspace{-1mm}
We also investigate how downstream test performance on the HOMO target in QM9 varies as a function of the number of upstream and downstream training examples. First, we compare the performance of GNS-TAT with Noisy Nodes either trained from scratch or using pre-trained parameters for different numbers of training examples from QM9; we also include the performance of just GNS-TAT. As shown in \cref{pcqsize-scale-decoder} (left), pre-training improves the downstream performance for all dataset sizes. The difference in test MAE also grows as the downstream training data reduces. Second, we assess the effect of varying the amount of pre-training data while fixing the downstream dataset size for both GNS and GNS-TAT as shown in \cref{pcqsize-scale-decoder} (middle). For both models, we find that downstream performance generally improves as upstream data increases, with saturating performance for GNS-TAT. More upstream data can yield better quality representations.

\vspace{-1mm}
\subsection{Varying Model Size}

We study the benefit of pre-training as models are scaled up on large downstream datasets. Recall that the S2EF dataset in OC20 contains around 130 million DFT evaluations for catalytic systems, providing three orders of magnitude more training data than QM9. We compare the performance of four GNS models with sizes ranging from 10 million to 1.2 billion parameters\, scaled up by increasing the hidden layer sizes in the MLPs. Each is pre-trained via denoising using the trajectories provided for the IS2RE/IS2RS tasks as described in \cref{sec:oc20}. We also compare this to a 130 million parameter variant of GNS trained from scratch. As shown in \cref{pcqsize-scale-decoder} (right), the pre-trained models continue to benefit from larger model sizes.We also observe that pre-training is beneficial, as the model trained from scratch underperforms in comparison: the 130 million parameters model trained from scratch is outperformed by a pre-trained model of less than half the size.

\subsection{Pre-training Improves Force Prediction}\label{sec:force-pretraining}

\begin{wraptable}[8]{R}{.34\textwidth}
\vspace{-7mm}
\caption{Performance of TorchMD-NET for force prediction on MD17 (aspirin).\label{tab:md17-aspirin}}
\centering
\begin{tabular}{@{}lc@{}}
\toprule
Method& Test MAE \\ \midrule
\tmdn{} & $0.268 \pm 0.003$ \\
\, + Pre-training & $\textbf{0.222} \pm 0.003$ \\ \bottomrule
\end{tabular}
\end{wraptable}

As shown in \cref{sec:force-field}, denoising structures corresponds to learning an approximate force field directly from equilibrium structures. We explore whether pre-training via denoising would therefore also improve models trained to predict atomic forces. We compare the performance of \tmdn{} for force prediction on the MD17 (aspirin) dataset with and without pre-training on \pcq{}. \cref{tab:md17-aspirin} shows that pre-training improves force prediction. We also assess the effect of pre-training on the OC20 dataset for force prediction in \cref{sec:oc20-forces}, similarly finding an improvement due to pre-training.

\vspace{-1mm}
\subsection{Freezing Pre-trained Parameters}

\begin{wrapfigure}[17]{R}{.35\textwidth}
\vspace{-6mm}
\begin{center}
  \includegraphics[scale=0.35]{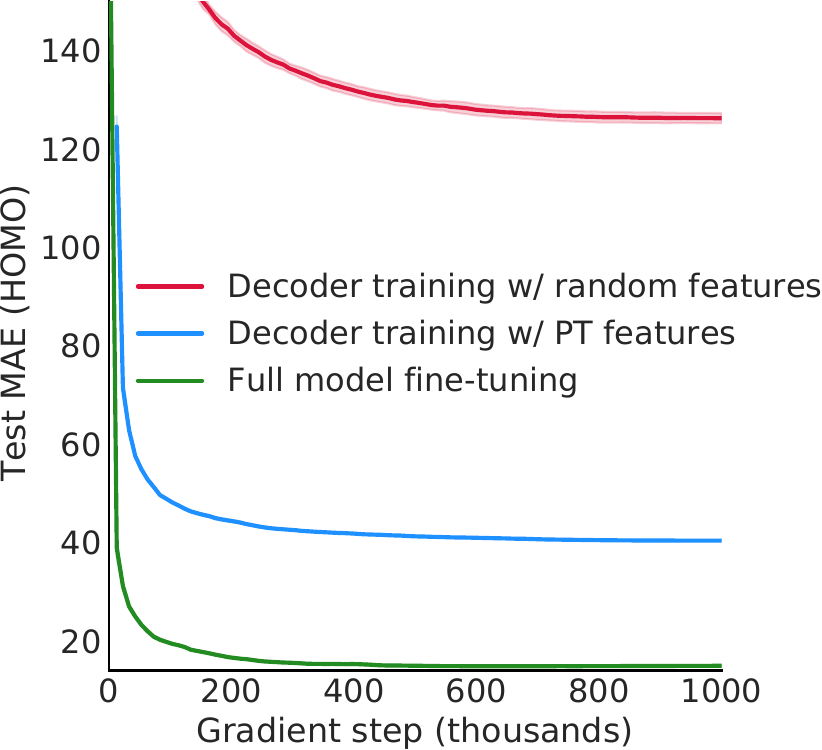}
\end{center}
\vspace{-3mm}
\caption{Training only the decoder results in significantly better performance when using pre-trained features rather than random ones.}\label{fig:decoder}
\end{wrapfigure}

Finally, we perform an experiment to assess how useful the features learned by pre-training are if they are not fine-tuned for the downstream task but kept fixed instead. Specifically, on the HOMO target in QM9, we freeze the backbone of the model and fine-tune only the decoder (\textit{cf.} \cref{sec:arch-details}). To evaluate this, we compare it to using random parameters from initialization for the model's backbone, which allows us to isolate how useful the pre-trained features are. As described in \cref{sec:arch-details}, the decoder is a simple module involving no message-passing.
\cref{fig:decoder} shows that only training the decoder while keeping the pre-trained parameters fixed results in test MAE of 40 meV, which is worse than fine-tuning the entire model but substantially better than the performance of >100 meV in test MAE resulting from training the decoder when the remaining parameters are randomly initialized. This suggests that the features learned by denoising are more discriminative for downstream prediction than random features. We note that training only the decoder is also substantially faster than training the entire network -- one batch on a single V100 GPU takes 15ms, which is $50\times$ faster than one batch using 16 TPUs for the full network.

\vspace{-2mm}
\section{Limitations \& Future Work}\label{sec:limitations}

\vspace{-2mm}
We have shown that pre-training can significantly improve performance for various tasks. One additional advantage of pre-trained models is that they can be shared in the community, allowing practitioners to fine-tune models on their datasets. However, unlike vision and NLP, molecular networks vary widely and the community has not yet settled on a ``standard'' architecture, making pre-trained weights less reusable. Moreover, the success of pre-training inevitably depends on the relationship between the upstream and downstream datasets. In the context of molecular property prediction, understanding what aspects of the upstream data distribution must match the downstream data distribution for transfer is an important direction for future work.
More generally, pre-training models on large datasets incurs a computational cost. However, our results show that pre-training for 3D molecular prediction does not require the same scale as large NLP and vision models. We discuss considerations on the use of compute and broader impact in \cref{sec:broader-impact}.

\vspace{-1mm}
\section{Conclusion}

\vspace{-1mm}
We investigated pre-training neural networks by denoising in the space of 3D molecular structures. We showed that denoising in this context is equivalent to learning a force field, motivating its ability to learn useful representations and shedding light on successful applications of denoising in other works \citep{godwin2022simple}. This technique enabled us to utilize existing large datasets of 3D structures for improving performance on various downstream molecular property prediction tasks, setting a new SOTA in some cases such as QM9. More broadly, this bridges the gap between the utility of pre-training in vision/NLP and molecular property prediction from structures. We hope that this approach will be particularly impactful for applications of deep learning to scientific problems.

\bibliography{biblio}
\bibliographystyle{iclr2023_conference}

\appendix

\section{Architectural Details}\label{sec:arch-details}

\subsection{Standard GNS}

As our base model architecture we chose a Graph Net Simulator (GNS) \citep{pmlr-v119-sanchez-gonzalez20a}, which consists of an \encoder{} which constructs a graph representation from the input $\inputmolset$, a \processor{} of repeated message passing blocks that update the latent graph representation, and a \decoder{} which produces predictions. Our implementation follows \poscite{godwin2022simple} modifications to enable molecular and graph-level property predictions which has been shown to achieve strong results across different molecular prediction tasks without relying on problem-specific features.

In the \encoder{}, we represent the set of atoms $\inputmolset = \{(a_1, \pos_1), (a_2, \pos_2), \dots,  (a_\nummol, \pos_\nummol) \}$ as a directed graph $\graph = (\vertexset, \edgeset)$ where $\vertexset = \{\vertex_1, \vertex_2, \dots, \vertex_{\left| S \right|}\}$ and $\edgeset = \{ \edge_{i,j} \}_{i,j}$ are the sets of ``featurized'' vertices and edges, respectively. Edges $\edge_{i,j} \in E$ are constructed whenever the distance between the $i$-th and $j$-th atoms is less than the connectivity radius $\connnectivityradius$, in which case we connect $\vertex_i$ and $\vertex_j$ with a \emph{directed} edge $\edge_{i,j}$ from $i$ to $j$ that is a featurization of the displacement vector $\pos_j - \pos_i$. Meanwhile, for the $i$-th atom, $\vertex_i$ is given by a learnable vector embedding of the atomic number $a_i$.

The \processor{} consists of $\nummplayers$ message-passing steps that produce intermediate graphs $G_1, \dots, G_{\nummplayers}$ (with the same connectivity structure as the initial one). Each of these steps computes the sum of a shortcut connection from the previous graph, and the application of an Interaction Network \citep{Battaglia2016InteractionNF}. Interaction Networks first update each edge feature by applying an ``edge update function'' to a combination of the existing feature and the features of the two connected vertices. They then update each vertex feature by applying a ``vertex update function'' to a combination of the existing feature and the (new) edge features of incoming edges. In GNS, edge update functions are 3 hidden layer fully-connected MLPs, using a ``shifted softplus'' ($\textrm{ssp}(x)=\log(0.5e^x + 0.5)$) activation function, applied to the concatenation of the relevant edge and vertex features, followed by a layer normalization layer. Vertex update functions are similar, but are applied to the concatenation of the relevant vertex feature and \emph{sum} over relevant edge features. 

In our implementation of GNS we applied the same \processor{} in sequence three times (with shared parameters), with the output of each being decoded to produce a prediction and corresponding loss value. The loss for the whole model is then given by the average of these. (Test-time predictions are meanwhile computed using only the output of the final \processor{}.)

The \decoder{} is responsible for computing graph-level and vertex-level predictions from the output of each \processor{}. Vertex-level predictions, such as noise as described in \cref{sec:denoising}, are decoded using an MLP applied to each vertex feature. Graph-level predictions (\textit{e.g.} energies) are produced by applying an MLP to each vertex feature, aggregating the result over vertices (via a sum), and then applying another MLP to the result.

\definecolor{redish}{HTML}{DC143C}

\subsection{GNS with Tailored Activation Transformation (GNS-TAT)} \begin{figure}[h]
\begin{center}
  \makebox[\textwidth]{\includegraphics[width=0.95\textwidth]{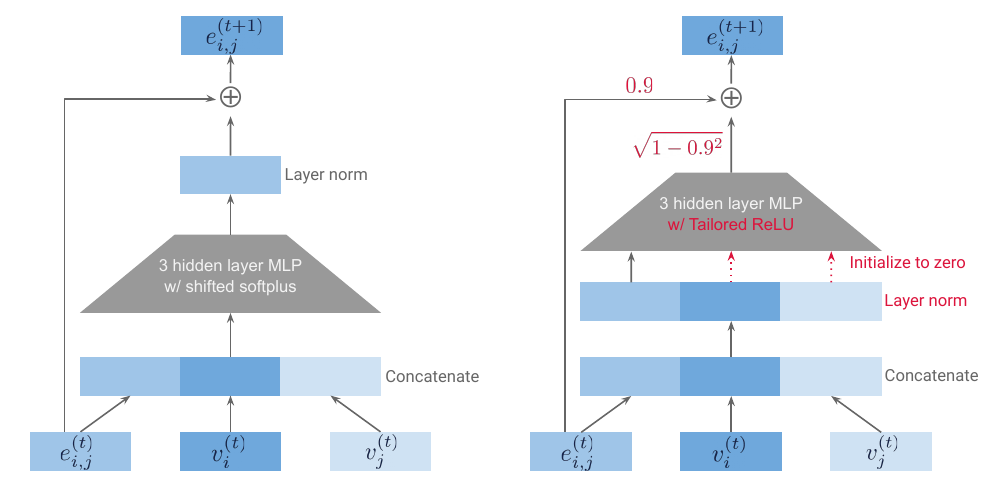}}
\end{center}
\vspace{-5pt}
\caption{Diagram showing the edge update for a single step $t$ of the \processor{}. \textbf{Left:} Edge update for GNS. \textbf{Right:} Edge update for GNS-TAT (with modifications shown in \textcolor{redish}{red}). \label{fig:edge-net}} 
\end{figure}

\begin{figure}[t]
\begin{center}
  \makebox[\textwidth]{\includegraphics[width=0.95\textwidth]{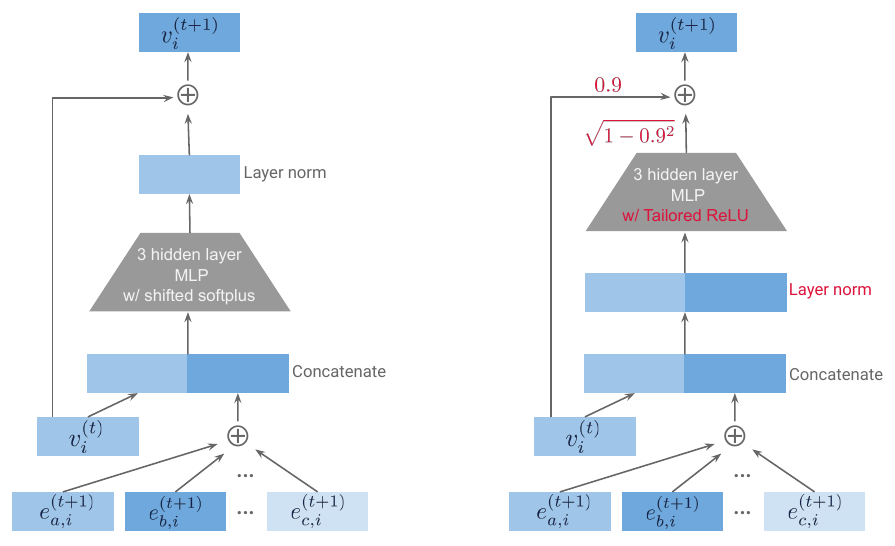}}
\end{center}
\vspace{-5pt}
\caption{Diagram showing the vertex update for a single step $t$ of the \processor{}. \textbf{Left:} Vertex update for GNS. \textbf{Right:} Vertex update for GNS-TAT (with modifications shown in \textcolor{redish}{red}).\label{fig:vertex-net}} 
\end{figure}
Tailored Activation Transformation (TAT) \citep{tat2022} is a method for initializing and transforming neural networks to make them easier to train, and is based on a similar method called Deep Kernel Shaping (DKS) \citep{dks2021}. TAT controls the propagation of ``q values'', which are initialization-time approximations to dimension-normalized squared norms of the network’s layer-wise activation vectors, and ``c values'', which are cosine similarities between such vectors (for different inputs). In other words, q values approximate $\|z(x)\|^2 / \dim(z(x))$, where $z(x)$ denotes a layer's output as a function of the network's input $x$, and c values approximate $z(x)^\top z(x') / (\|z(x)\| \|z(x')\|)$, where $x'$ is another possible network input. In standard deep networks, c values will converge to a constant value $c_\infty \in [0, 1]$, so that ``geometric information'' is lost, which leads to training difficulties \citep{dks2021}. DKS/TAT prevents this convergence through a combination of careful weight initialization, and transformations to the network’s activation functions and sum/average layers. 

Oversmoothing \citep{Chen2019Oversmoothing, Cai2020Oversmoothing, RongDropEdge2019, ZhouNodeNorm2020, yang2020revisiting, Zhao2020PairNormTO, Do2021GraphCN} is a phenomenon observed in GNN architectures where vertex/edge features all converge to approximately the same value with depth, and is associated with training difficulties. It is reminiscent of how, when $c_\infty = 1$, feature vectors will converge with depth to a constant input-independent vector in standard deep networks. It therefore seems plausible that applying TAT to GNNs may help with the oversmoothing problem and thus improve training performance.

Unfortunately, the GNS architecture violates two key assumptions of TAT. Firstly, the sums over edge features (performed in the vertex update functions) violate the assumption that all sum operations must be between the outputs of linear layers with \emph{independently sampled} initial weights. Secondly, GNS networks have multiple inputs for which information needs to be independently preserved and propagated to the output, while DKS/TAT assumes a single input (or multiple inputs whose representations evolve independently in the network).

To address these issues we introduce a new initialization scheme called “Edge-Delta”, which initializes to zero the weights that multiply incoming vertex features in the edge update functions (and treats these weights as absent for the purpose of computing the initial weight variance). This approach is inspired by the use of the ``Delta initialization" \citep{balduzzi2017shattered, xiao2018dynamical} for convolutional networks in DKS/TAT, which initializes filter weights of the non-central locations to zero, thus allowing geometric information, in the form of c values, to propagate independently for each location in the feature map. When using the Edge-Delta initialization, edge features propagate independently of each other (and of vertex features), through what is essentially a standard deep residual network (with edge update functions acting as the residual branches), which we will refer to as the ``edge network''.

Given the use of Edge-Delta we can then apply TAT to GNS as follows\footnote{Note that Edge-Delta initialization is compatible with TAT, since for the purposes of q/c value propagation, zero-initialized connections in the network can be treated as absent.}. First, we replace GNS's activation functions with TAT's transformed Leaky-ReLU activation functions (or “Tailored ReLUs”), which we compute with TAT's $\eta$ parameter set to $0.8$, and its ``subnetwork maximizing function'' defined on the edge network\footnote{For the purposes of computing the subnetwork maximizing function we ignore the rest of the network and just consider the edge network. While the layer normalization layer (which we move before the MLP) technically depends on the vertex features, this dependency can be ignored as long as the q values of these features is 1 (which will be true given the complete set of changes we make to the GNS architecture).}. We also replace each sum involving shortcut connections with weighted sums, whose weights are $0.9$ and $\sqrt{1-0.9^2}$ for the shortcut and non-shortcut branches respectively. We retain the use of layer normalization layers in the edge/vertex update functions, but move them to before the first fully-connected layer, as this seems to give the best performance. As required by TAT, we use a standard Gaussian fan-in initialization for the weights, and a zero initialization for the biases, with Edge-Delta used only for the first linear layer of the edge update functions. Finally, we replace the sum used to aggregate vertex features in the \decoder{} with an average. See \cref{fig:edge-net,fig:vertex-net} for an illustration of these changes.

We experimented with an analogous “Vertex-Delta” initialization, which initializes to zero weights in the vertex update functions that multiply summed edge features, but found that Edge-Delta gave the best results. This might be because the edge features, which encode distances between vertices (and are best preserved with the Edge-Delta approach), are generally much more informative than the vertex features in molecular property prediction tasks. We also ran informal ablation studies, and found that each of our changes to the original GNS model contributed to improved results, with the use of Edge-Delta and weighted shortcut sums being especially important.

\section{Denoising as Learning a Force Field}\label{sec:force-field-app}

We specify a molecular structure as $\x = (\x^{(1)}, \dots, \x^{(N)}) \in \R^{3N}$, where $\x^{(i)} \in \R^3$ is the coordinate of atom $i$. Let $E(\x)$ denote the total (potential) energy of $\x$, such that $-\nabla_{\x} E(\x)$ are the forces on the atoms. As discussed in \cref{sec:force-field}, learning the force field, \textit{i.e.} the mapping $\x \mapsto -\nabla_{\x} E(\x)$, is a reasonable pre-training objective. Furthermore, learning the force field can be viewed as score-matching if we define the distribution $\pphysics(\x) \propto \exp(-E(\x))$ and observe that the \textit{score} of $\pphysics$ is the force field: $\nabla_{\x} \log \pphysics(\x) = -\nabla_{\x} E(\x)$. 

However, a technical caveat is that $\pphysics$ is an \textit{improper} probability density, because it cannot be normalized due to the translation invariance of $E$. Writing the translation of a structure as $\x + \trans \defeq (\x^{(1)} + \trans, \dots, \x^{(N)} + \trans)$ where $\trans \in \R^3$ is a constant vector, we have $E(\x + \trans) = E(\x)$. This implies that the normalizing constant $\int_{\R^{3N}} \pphysics(\x) \diff \x$ diverges to infinity. To remedy this, we can restrict ourselves to the $(3N - 3)$-dimensional subspace $V \defeq \{ \x \in \R^{3N} \mid \sum_i \x^{(i)} = 0 \} \subseteq \R^{3N}$ consisting of the mean-centered structures, over which $\pphysics$ can be defined as a normalizable distribution. 

Proceeding similarly as \cref{sec:force-field}, let $\x_1, \dots, \x_n \in V$ be a set of \textit{mean-centered} equilibrium structures. For any $\xtilde \in V$, we now approximate
\begin{align*}
    \pphysics(\xtilde) \approx  q_\poscoeff(\xtilde) \defeq \frac{1}{n} \sum_{i=1}^n q_\poscoeff(\xtilde \mid \x_i), 
\end{align*}
where the Gaussian distributions $q_\poscoeff(\xtilde \mid \x_i)$ are defined on $V$ as:
\begin{align*}
    q_\poscoeff(\xtilde \mid \x_i) = \frac{1}{(2 \pi \poscoeff)^{(3N-3)/2}} \exp\left(-\frac{1}{2\poscoeffsq}\norm{\xtilde - \x_i}^2\right). 
\end{align*}
For convenience, we have expressed structures as vectors in the ambient space $\R^{3N}$, however they are restricted to lie in the smaller space $V$. Note that the normalizing constant accounts for the fact that $V$ is $(3N-3)$-dimensional. As before, we define $q_0(\x) = \frac{1}{n} \sum_{i = 1}^n \delta(\x = \x_i)$ to be the empirical distribution and $q_\poscoeff(\xtilde, \x) = q_\poscoeff(\xtilde \mid \x) q_0(\x)$. The score-matching objective is given by: 
\begin{align}
    J_1(\theta) = \E_{q_\poscoeff(\xtilde) } \left[ \norm{\gnn_\theta(\xtilde) - \nabla_{\xtilde} \log q_\poscoeff(\xtilde)}^2 \right], \label{eq:score-matching-app}
\end{align}
where the expectation is now over $V$. As shown by \citet{VincentConnection}, minimizing the objective above is equivalent to the minimizing the following objective:
\begin{align}
    J_2(\theta) = \E_{q_\poscoeff(\xtilde, \x)} \left[ \norm{\gnn_\theta(\xtilde) - \nabla_{\xtilde} \log q_\poscoeff(\xtilde \mid \x)}^2 \right]. \label{eq:denoising-objective-app}
\end{align}
This is recognized as a denoising objective, because $\nabla_{\xtilde} \log q_\poscoeff(\xtilde \mid \x) = (\x - \xtilde)/\poscoeffsq$. A  practical implication of this analysis is that the noise $(\x - \xtilde)/\poscoeffsq \in V$ should be mean-centered, which is intuitive since it is impossible to predict a translational component in the noise.

We include a proof of the equivalence between \cref{eq:score-matching-app,eq:denoising-objective-app} for completeness: 

\begin{proposition}[\citet{VincentConnection}]
The minimization objectives $J_1(\theta)$ and $J_2(\theta)$ are equivalent.
\end{proposition}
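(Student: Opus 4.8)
The plan is to prove this by expanding the squared norm in each objective, discarding terms that do not depend on $\theta$, and showing that the $\theta$-dependent parts coincide. This is the standard Vincent-style argument, adapted to the mean-centered subspace $V$.

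First I would write $J_1(\theta) = \E_{q_\poscoeff(\xtilde)}\left[ \norm{\gnn_\theta(\xtilde)}^2 - 2 \gnn_\theta(\xtilde)^\top \nabla_{\xtilde} \log q_\poscoeff(\xtilde) + \norm{\nabla_{\xtilde} \log q_\poscoeff(\xtilde)}^2 \right]$. The last term is independent of $\theta$, so it can be dropped. For the first term, since the marginal of $\xtilde$ under $q_\poscoeff(\xtilde, \x)$ is exactly $q_\poscoeff(\xtilde)$, we have $\E_{q_\poscoeff(\xtilde)}\left[ \norm{\gnn_\theta(\xtilde)}^2 \right] = \E_{q_\poscoeff(\xtilde, \x)}\left[ \norm{\gnn_\theta(\xtilde)}^2 \right]$, which matches the corresponding term in the expansion of $J_2$. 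So the crux is the cross term.

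The key step is to show $\E_{q_\poscoeff(\xtilde)}\left[ \gnn_\theta(\xtilde)^\top \nabla_{\xtilde} \log q_\poscoeff(\xtilde) \right] = \E_{q_\poscoeff(\xtilde, \x)}\left[ \gnn_\theta(\xtilde)^\top \nabla_{\xtilde} \log q_\poscoeff(\xtilde \mid \x) \right]$. I would rewrite the left-hand side as $\int_V \gnn_\theta(\xtilde)^\top \nabla_{\xtilde} q_\poscoeff(\xtilde) \diff \xtilde$ using $\nabla_{\xtilde} \log q_\poscoeff(\xtilde) = \nabla_{\xtilde} q_\poscoeff(\xtilde) / q_\poscoeff(\xtilde)$, then substitute $q_\poscoeff(\xtilde) = \int_V q_\poscoeff(\xtilde \mid \x) q_0(\x) \diff \x$ (with the $\delta$-sum making this a finite sum over the $\x_i$), move the gradient inside, and use $\nabla_{\xtilde} q_\poscoeff(\xtilde \mid \x) = q_\poscoeff(\xtilde \mid \x)\, \nabla_{\xtilde} \log q_\poscoeff(\xtilde \mid \x)$ to recover the right-hand side after recognizing $q_\poscoeff(\xtilde \mid \x) q_0(\x) = q_\poscoeff(\xtilde, \x)$. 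Finally I would note that $J_1(\theta) - J_2(\theta)$ equals a $\theta$-independent constant (the difference of the two "score-squared" terms), so the two objectives have the same minimizers.

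The main obstacle — really a bookkeeping subtlety rather than a deep one — is that all the integrals live on the $(3N-3)$-dimensional affine subspace $V$ rather than on $\R^{3N}$, so I would be careful that the gradient $\nabla_{\xtilde}$ is the intrinsic gradient on $V$, that the Gaussians $q_\poscoeff(\xtilde \mid \x_i)$ are normalized with respect to Lebesgue measure on $V$ (as already specified in the appendix), and that differentiation under the integral sign is justified by the smoothness and rapid decay of the Gaussian density. None of these cause real trouble: $V$ is a linear space after translating by any fixed point, so one can just work in a $(3N-3)$-dimensional coordinate system and the argument is verbatim the one in \cite{VincentConnection}.
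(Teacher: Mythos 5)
Your proposal is correct and follows essentially the same route as the paper's proof: expand the square, observe the $\norm{\gnn_\theta(\xtilde)}^2$ terms agree since the marginal of $\xtilde$ under $q_\poscoeff(\xtilde,\x)$ is $q_\poscoeff(\xtilde)$, and equate the cross terms via $\nabla q = q\,\nabla\log q$ and the mixture decomposition of $q_\poscoeff(\xtilde)$. The paper handles the subspace issue exactly as you suggest, by choosing an orthonormal basis for $V$ and integrating in $(3N-3)$-dimensional coordinates.
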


\begin{proof}
We first observe:
\begin{align*}
    J_1(\theta) &= \E_{q_\poscoeff(\xtilde) } \left[ \norm{\gnn_\theta(\xtilde) } ^ 2 \right] - 2 \E_{q_\poscoeff(\xtilde) } \left[ \langle \gnn_\theta(\xtilde) , \nabla_{\xtilde} \log q_\poscoeff(\xtilde) \rangle \right] + C_1 \\
    J_2(\theta) &= \E_{q_\poscoeff(\xtilde) } \left[ \norm{\gnn_\theta(\xtilde) } ^ 2 \right] - 2 \E_{q_\poscoeff(\xtilde, \x) } \left[ \langle \gnn_\theta(\xtilde) , \nabla_{\xtilde} \log q_\poscoeff(\xtilde \mid \x) \rangle \right] + C_2,
\end{align*}
where $C_1, C_2$ are constants independent of $\theta$. Therefore, it suffices to show that the middle terms on the RHS are equal. Since expectations over $q_\poscoeff(\xtilde)$ and $q_\poscoeff(\xtilde, \x)$ are restricted to $V \subseteq \R^{3N}$, we apply a change of basis to write them as integrals against the $(3N - 3)$-dimensional Lebesgue measure. Pick an orthonormal basis $\{\vv_1, \dots, \vv_{3N-3}\} \subseteq \R^{3N}$ for $V$ and let $P_V = [\vv_1, \dots, \vv_{3N-3}] \in \R^{3N \times 3(N-1)}$ be the projection matrix, so $\z = P_V^\intercal \x$ expresses a mean-centered structure $\x$ in terms of the coordinates of the chosen basis for $V$. Noting that $P_V$ has orthonormal columns and that it yields a bijection between $V$ and $\R^{3N-3}$, we calculate:

\begin{align*}
    & \E_{q_\poscoeff(\xtilde) } \left[ \langle \gnn_\theta(\xtilde) , \nabla_{\xtilde} \log q_\poscoeff(\xtilde) \rangle \right] \\
    &= \int_{\R^{3N-3}}  q_\poscoeff(P_V\ztilde)  \langle \gnn_\theta(P_V\ztilde) , \nabla \log q_\poscoeff(P_V\ztilde) \rangle \diff \ztilde \\
    &= \int_{\R^{3N-3}}  q_\poscoeff(P_V\ztilde)  \left\langle \gnn_\theta(P_V\ztilde) , \frac{\nabla q_\poscoeff(P_V\ztilde)}{q_\poscoeff(P_V\ztilde)}  \right\rangle  \diff \ztilde\\
    &= \int_{\R^{3N-3}} \left\langle \gnn_\theta(P_V\ztilde) , \nabla q_\poscoeff(P_V\ztilde)  \right\rangle \diff \ztilde\\
    &= \int_{\R^{3N-3}} \left\langle \gnn_\theta(P_V\ztilde) , \frac{1}{n} \sum_{i=1}^n \nabla q_\poscoeff(P_V\ztilde \mid \x_i)  \right\rangle \diff \ztilde\\
    &= \int_{\R^{3N-3}} \left\langle \gnn_\theta(P_V\ztilde) , \frac{1}{n} \sum_{i=1}^n q_\poscoeff(P_V\ztilde \mid \x_i) \nabla \log q_\poscoeff(P_V\ztilde \mid \x_i)  \right\rangle \diff \ztilde\\
    &= \int_{\R^{3N-3}} \frac{1}{n} \sum_{i=1}^n q_\poscoeff(P_V\ztilde \mid \x_i) \left\langle \gnn_\theta(P_V\ztilde) , \nabla \log q_\poscoeff(P_V\ztilde \mid \x_i)  \right\rangle \diff \ztilde\\
    &= \E_{q_\poscoeff(\xtilde, \x) } \left[ \langle \gnn_\theta(\xtilde) , \nabla_{\xtilde} \log q_\poscoeff(\xtilde \mid \x) \rangle \right] 
\end{align*}
\end{proof}

\section{Broader Impact}\label{sec:broader-impact}
\textbf{Who may benefit from this work?} Molecular property prediction works towards a range of applications in materials design, chemistry, and drug discovery. Wider use of pre-trained models may accelerate progress in a similar manner to how pre-trained language or image models have enabled practitioners to avoid training on large datasets from scratch. Pre-training via denoising is simple to implement and can be immediately adopted to improve performance on a wide range of molecular property prediction tasks. As research converges on more standardized architectures, we expect shared pre-trained weights will become more common across the community.

\textbf{Potential negative impact and ethical considerations.} Pre-training models on large structure datasets incurs additional computational cost when compared to training a potentially smaller model with less capacity from scratch. Environmental mitigation should be taken into account when pre-training large models \citep{DBLP:journals/corr/abs-2104-10350}. However, the computational cost of pre-training can and should be offset by sharing pre-trained embeddings when possible. Moreover, in our ablations of upstream dataset sizes for GNS-TAT, we observed that training on a subset of PCQM4Mv2 was sufficient for strong downstream performance. In future work, we plan to investigate how smaller subsets with sufficient diversity can be used to minimize computational requirements, \textit{e.g.} by requiring fewer gradient steps. 

\section{Datasets}\label{sec:datasets-app}

\textbf{\pcq{}.} The main dataset we use for pre-training is \pcq{} \citep{pcq2017} (license: CC BY 4.0), which contains 3,378,606 organic molecules, specified by their 3D structures at equilibrium (atom types and coordinates) calculated using DFT. Molecules in \pcq{} have around 30 atoms on average and vary in terms of their composition, with the dataset containing 22 unique elements in total. The molecules in \pcq{} only contain one label, unlike \textit{e.g.} QM9, which contains 12 labels per molecule, however we do not use these labels as denoising only requires the structures.

\textbf{QM9.} QM9 is a dataset \citep{Ramakrishnan2014QuantumCS} (license: CCBY 4.0) with approximately 130,000 small organic molecules containing up to nine heavy C, N, O, F atoms, specified by their structures. Each molecule has 12 different labels corresponding to different molecular properties, such as highest occupied molecular orbital (HOMO) energy and internal energy, which we use for fine-tuning.

\textbf{OC20.} Open Catalyst 2020 \citep{Chanussot2020TheOC} (OC20, license: CC Attribution 4.0) is a recent large benchmark containing trajectories of interacting surfaces and adsorbates that are relevant to catalyst discovery and optimization. This dataset contains three tasks: predicting the relaxed state energy from the initial structure (IS2RE), predicting the relaxed structure from the initial structure (IS2RS) and predicting the energy and forces given the structure at any point in the trajectory (S2EF). For IS2RE and IS2RS, there are 460,000 training examples, where each data point is a trajectory of a surface-adsorbate molecule pair starting with a high-energy initial structure that is relaxed towards a low-energy, equilibrium structure. For S2EF, there are 113 million examples of (non-equilibrium) structures with their associated energies and per-atom forces.

\textbf{DES15K.} DES15K \citep{Donchev2021QuantumCB} (license: CC0 1.0) is a small dataset containing around 15,000 interacting molecule pairs, specifically dimer geometries with non-covalent molecular interactions. Each pair is labelled with the associated interaction energy computed using the coupled-cluster method with single, double, and perturbative triple excitations (CCSD(T)) \citep{ccsdt}, which is widely regarded as the gold-standard method in electronic structure theory. 

\textbf{Usage of DFT-generated structures for pre-training.} The structures in \pcq{} are obtained using DFT calculations, which \textit{in principle} could have also been used to generate labels for molecular properties in DFT-generated downstream datasets, such as QM9. However, there are multiple reasons why denoising remains a desirable pre-training objective in such settings. First, although there is a computational cost for generating datasets such as \pcq{}, it is now openly available and part of our aim is to understand how to leverage such datasets for tasks on other datasets (analogous to how ImageNet is expensive to build, but once it is available, it is important to understand how it can improve downstream performance on other datasets). Second, even if \pcq{} contained all the labels in QM9, pre-training via denoising structures allows one to pre-train a single, \textit{label-agnostic} model which can be individually fine-tuned on any of the targets in QM9. This is substantially cheaper than per-target pre-training, and the resulting pre-trained model is also re-useable for differing needs and downstream tasks.

In other settings where the downstream dataset is generated using more expensive methods than DFT, pre-training via denoising DFT-relaxed structures can also be helpful and has a clear benefit, as shown by our experiment on the CCSD(T)-generated dataset DES15K where denoising on \pcq{} improves performance for a downstream task involving a ``higher'' level of theory such as CCSD(T). Generally, we emphasize that the methodology of denoising structures can be applied to any dataset of structures (regardless of whether they are computed using DFT or not). We hope that denoising will be useful in the future for learning representations by pre-training on structures obtained through other methods such as experimental data (in which case labeling may be expensive) and databases generated by other models such as AlphaFold (where only structures are available) \citep{AlphaFold2021}.

\section{Experiment Setup and Compute Resources}\label{sec:hardware-training}
Below, we list details on our experiment setup and hardware resources used.

\textbf{GNS \& GNS-TAT.} GNS-TAT training for QM9, PCQM4Mv2 and DES15K was done on a cluster of 16 TPU v3 devices and evaluation on a single V100 device. GNS training for OC20 was done on 8 TPU v4 devices, with the exception of the 1.2 billion parameters variant of the model, which was trained on 64 TPU v4 devices.
Pre-training on PCQM4Mv2 was executed for $3\cdot 10^5$ gradient updates (approximately 1.5 days of training). Fine-tuning experiments were run until convergence for QM9 ($10^6$ gradient updates taking approximately 2 days) and DES15K ($10^5$ gradient updates taking approximately 4 hours) and stopped after $5 \cdot 10^5$ gradient updates on OC20 (2.5 days) to minimize hardware use (the larger models keep benefiting from additional gradient updates).

\textbf{\tmdn{}.} We implemented denoising for \tmdn{} on top of \poscite{tholke2022torchmd} open-source code.\footnote{Available on GitHub at: \url{https://github.com/torchmd/torchmd-net}.} Models were trained on QM9 using data parallelism over two NVIDIA RTX 2080Ti GPUs. Pre-training on \pcq{} was done using three GPUs to accommodate the larger molecules while keeping the batch size approximately the same as QM9. All hyperparameters except the learning rate schedule were kept fixed at the defaults. Pre-training took roughly 24 hours, whereas fine-tuning took around 16 hours.

\textbf{Hyperparameter optimization.} We note that effective pre-training via denoising requires sweeping noise values, as well as loss co-efficients for denoising and atom type recovery. For GNS/GNS-TAT, we relied on the hyperparameters published by \citet{godwin2022simple} but determined new noise values for pre-training and fine-tuning by tuning over a grid of approximately 5 values each on \pcq{} and QM9 (for the HOMO target). We used the same values for DES15K without modification. We also ran a similar number of experiments to determine cosine cycle parameters for learning rates.

\section{Hyperparameters}\label{sec:hyperparameters}
We report the main hyperparameters used for GNS and GNS-TAT below.

\begin{table}[h!]
\caption{GNS-TAT hyperparameters for pre-training on \pcq{}.}
    \label{pcq-pre}
    \centering
    \begin{tabular}{ll}
      \toprule
       Parameter & Value or description \\
       \midrule
       Gradient steps & $3\cdot 10^5$ \\
       Optimizer & Adam with warm up and 1-cycle cosine decay schedule \\
       $\beta_1$   & $0.9$ \\              
       $\beta_2$   & $0.95$ \\                     
       Warm up steps    & $10^4$ \\
       Warm up start learning rate    & $10^{-5}$ \\       
       Warm up max learning rate    & $10^{-4}$ \\      
       Cosine min learning rate & $10^{-7}$ \\
       Cosine cycle length    & $5\cdot 10^5$ \\       
       Loss type & Mean squared error \\
     \midrule
       Batch size & Dynamic to max edge/vertex/graph count \\
       Max vertices in batch & 256 \\
       Max edges in batch & 9216 \\   
       Max graphs in batch & 8 \\    
       Distance featurization & Bessel first kind ($r_\text{min}=0, \sigma=1.0$) \\
       Max edges per vertex & 20 \\
     \midrule
       MLP number of layers & 3 \\
       MLP hidden sizes & 1024 \\
       Activation & Tailored ReLU (with negative slope chosen using TAT) \\
       message passing layers & 10 \\
       Block iterations & 3 \\
       Vertex/edge latent vector sizes & 512 \\ 
       Decoder aggregation & Mean \\
     \midrule
       Position noise & Gaussian ($\mu=0, \sigma=0.02$) \\
       Parameter update & Exponentially moving average (EMA) smoothing \\
       EMA decay & 0.9999 \\
       Position loss coefficient & 1.0 \\
       Atom type mask probability & 0.75 \\
       Atom type loss coefficient & 4.0 \\
    \bottomrule
\end{tabular}
\end{table}

\newpage
\begin{table}[h!]
\caption{GNS-TAT hyperparameters for fine-tuning on QM9 and DES15K.}
    \label{qm9-fine}
    \centering
    \begin{tabular}{ll}
      \toprule
       Parameter & Value or description \\
       \midrule
       Gradient steps & $10^6$ QM9 / $10^5$ DES15K \\
       Optimizer & Adam with warm up and 1-cycle cosine decay schedule \\
       $\beta_1$   & $0.9$ \\              
       $\beta_2$   & $0.95$ \\                     
       Warm up steps    & $10^4$ \\
       Warm up start learning rate    & $10^{-5}$ \\       
       Warm up max learning rate    & $10^{-4}$ \\      
       Cosine min learning rate & $3\cdot 10^{-7}$ \\
       Cosine cycle length    & $10^6$ QM9 / $10^5$ DES15K \\       
       Loss type & Mean squared error \\
     \midrule
       Batch size & Dynamic to max edge/vertex/graph count \\
       Max vertices in batch & 256 \\
       Max edges in batch & 3072 \\   
       Max graphs in batch & 8 \\    
       Distance featurization & Bessel first kind ($r_\text{min}=0, \sigma=1.0$) \\
       Max edges per vertex & 20 \\
     \midrule
       MLP number of layers & 3 \\
       MLP hidden sizes & 1024 \\
       Activation & Tailored ReLU (with negative slope chosen using TAT) \\
       message passing layers & 10 \\
       Block iterations & 3 \\
       Vertex/edge latent vector sizes & 512 \\ 
       Decoder aggregation & Mean \\
     \midrule
       Position noise & Gaussian ($\mu=0, \sigma=0.05$) \\
       Parameter update & Exponentially moving average (EMA) smoothing \\
       EMA decay & 0.9999 \\
       Position loss coefficient & 0.01 \\
       Atom type mask probability & 0.0 \\
       Atom type loss coefficient & 0.0 \\
    \bottomrule
\end{tabular}
\end{table}

\newpage
\begin{table}[t!]
\caption{GNS hyperparameters for OC20.}
    \label{oc20}
    \centering
    \begin{tabular}{ll}
      \toprule
       Parameter & Value or description \\
       \midrule
       Gradient steps & $5\cdot 10^5$ \\
       Optimizer & Adam with warm up and 1-cycle cosine decay schedule \\
       $\beta_1$   & $0.9$ \\              
       $\beta_2$   & $0.95$ \\                     
       Warm up steps    & $5\cdot 10^5$ \\
       Warm up start learning rate    & $10^{-5}$ \\
       Warm up max learning rate    & $10^{-4}$ \\      
       Cosine min learning rate & $5\cdot 10^{-6}$ \\
       Cosine cycle length    & $5\cdot 10^6$ \\
       Loss type & Mean squared error \\
     \midrule
       Batch size & Dynamic to max edge/vertex/graph count \\
       Max vertices in batch & 1024 \\
       Max edges in batch & 12800 \\   
       Max graphs in batch & 10 \\    
       Distance featurization & Gaussian ($\mu=0, \sigma=0.5$) \\
       Max edges per vertex & 20 \\
     \midrule
       MLP number of layers & 3 \\
       MLP hidden sizes & 1024 \\
       Activation & shifted softplus \\
       message passing layers & 5 \\
       Block iterations & 5 \\
       Vertex/edge latent vector sizes & 512 \\ 
       Decoder aggregation & Sum \\
     \midrule
       Position noise & Gaussian ($\mu=0, \sigma=0.2$) \\
       Parameter update & Exponentially moving average (EMA) smoothing \\
       EMA decay & 0.9999 \\
       Position loss coefficient & 1.0 \\
       Atom type mask probability & 0.0 \\
       Atom type loss coefficient & 0.0 \\
    \bottomrule
\end{tabular}
\end{table}

\section{Additional Experimental Results}

\subsection{Performance of \tmdn{} on DES15K}\label{sec:des15k-tmdn}

In addition to the experiments involving GNS-TAT on DES15K in \cref{sec:des15k}, we also consider the performance of \tmdn{} on DES15K with and without pre-training on \pcq{}. As shown in \cref{tbl:des15k-tmdn}, \tmdn{} outperforms GNS-TAT when trained from scratch, and pre-training then yields a further boost in performance as with GNS-TAT. 

\begin{table}[h!]
\caption{Performance of \tmdn{} with and without pre-training for interaction energy prediction on DES15K.}
\centering
\label{tbl:des15k-tmdn}
\begin{tabular}{@{}lc@{}}
\toprule
Model & Test MAE (kcal/mol) \\ \midrule
\tmdn{} & 0.721 \\
\, + Pre-training on \pcq{} & \textbf{0.406} \\ \bottomrule
\end{tabular}
\end{table}

\subsection{Comparison of OC20 IS2RE Pre-training Performance with Other Architectures}

\cref{tab:oc20-baselines} compares the performance of our models with various other architectures proposed in prior work. GNS yields SOTA performance on each of the four validation sets for the direct IS2RE task. As discussed in \cref{sec:oc20} and shown in \cref{oc-desk-size} (left), the model pre-trained on OC20 itself achieves SOTA performance with faster convergence than all other GNS variants. Note that since we pre-train on OC20 itself, the pre-trained model performs equally well at convergence as the model trained from scratch with noisy nodes (\textit{cf.} \cref{sec:oc20}).

\begin{table}[h!]
\centering
\caption{Comparison of different variants of GNS with other baseline architectures on IS2RE prediction for OC20.}
\setlength\tabcolsep{4pt}
\label{tab:oc20-baselines}
\begin{tabular}{@{}lccccc@{}}
\toprule
\multirow{2}{*}{Model} & \multicolumn{5}{c}{Validation MAE for IS2RE}                \\ \cmidrule(l){2-6} 
                                & ID     & OOD Adsorbate & OOD Catalyst & OOD Both & Average \\ \midrule
DimeNet ++                      & 0.5636 & 0.7127        & 0.5612       & 0.6492   & 0.6217  \\
GemNet                          & 0.5561 & 0.7342        & 0.5659       & 0.6964   & 0.6382  \\
SphereNet                       & 0.5632 & 0.6682        & 0.5590       & 0.6190   & 0.6024  \\
SEGNN                           & 0.5310 & 0.6432        & 0.5341       & 0.5777   & 0.5715  \\ \midrule
GNS                             & 0.5233 & 0.6295        & 0.5202       & 0.5617   & 0.5587  \\
GNS + NN                        & 0.4196 & 0.4900        & 0.4316       & 0.4282   & 0.4424  \\
GNS + NN (PT on PCQM4Mv2)       & 0.4135 & 0.4856        & 0.4245       & 0.4245   & 0.4370  \\
GNS + NN (PT on OC20)           & 0.4164 & 0.4836        & 0.4267       & 0.4237   & 0.4376  \\ \bottomrule
\end{tabular}
\end{table}

\subsection{Pre-training via Denoising for Force Prediction on OC20}\label{sec:oc20-forces}

Recall that in \cref{sec:force-pretraining} we explored whether pre-training also improves force prediction models, given the link between denoising and learning forces as described in \cref{sec:force-field}. In this section, we consider a second experiment for force models. The OC20 dataset contains a force prediction task (S2EF), where a model is trained to predict point-wise energy and forces (each point being a single DFT evaluation during a relaxation trajectory) from a given 3D structure. \cref{tab:oc20-force-cosine,tab:oc20-force-mae} show the performance of GNS when trained from scratch vs. pre-trained via denoising on equilibrium structures in OC20, as described in \cref{sec:oc20}. We show two metrics for measuring force prediction performance on each of the four validation datasets: mean absolute error (lower is better) and cosine similarity (higher is better). We observe that the pre-trained model improves upon the model trained from scratch for both metrics and all four validation datasets, with improvements up to 15\%.

\begin{table}[h!]
\centering
\caption{Force prediction on OC20 by MAE (lower is better).}
\label{tab:oc20-force-mae}
\begin{tabular}{@{}lcc@{}}
\toprule
\multirow{2}{*}{Validation Dataset} & \multicolumn{2}{c}{Model} \\ \cmidrule(l){2-3} 
                                    & GNS     & Pre-trained GNS \\ \midrule
ID                                  & 0.0332  & \textbf{0.0282} \\
OOD Adsorbate                       & 0.0366  & \textbf{0.0314} \\
OOD Catalyst                        & 0.0360  & \textbf{0.0335} \\
OOD Both                            & 0.0406  & \textbf{0.0382} \\ \bottomrule
\end{tabular}
\end{table}

\begin{table}[h!]
\centering
\caption{Force prediction on OC20 by cosine similarity (higher is better).}
\label{tab:oc20-force-cosine}
\begin{tabular}{@{}lcc@{}}
\toprule
\multirow{2}{*}{Validation Dataset} & \multicolumn{2}{c}{Model} \\ \cmidrule(l){2-3} 
                                    & GNS     & Pre-trained GNS \\ \midrule
ID                                  & 0.4845  & \textbf{0.5517} \\
OOD Adsorbate                       & 0.4730  & \textbf{0.5414} \\
OOD Catalyst                        & 0.4553  & \textbf{0.4983} \\
OOD Both                            & 0.4417  & \textbf{0.4849} \\ \bottomrule
\end{tabular}
\end{table}

\end{document}